\def\th@plain{%
  \thm@notefont{}
  \itshape 
}
\def\th@definition{%
  \thm@notefont{}
  \normalfont 
}
\renewcommand{\qed}{\hfill \mbox{\raggedright \rule{0.1in}{0.1in}}}
\newcommand{\tbf}[1]{\textbf{#1}}
\DeclarePairedDelimiter{\Bigabs}{\Big|}{\Big|}
\DeclarePairedDelimiter{\norm}{\lVert}{\rVert}
\DeclareMathOperator*{\argmax}{\textrm{argmax}}
\newcommand{\beq}[1]{\begin{equation}#1\end{equation}}
\newcommand{\trm}[1]{\mathrm{#1}}
\providecommand\f[2]{\ensuremath \frac{#1}{#2}}
\providecommand\f[2]{\ensuremath \frac{#1}{#2}}
\providecommand\Rbrac[1]{\ensuremath \big(#1 \big)}
\providecommand\sqbrac[1]{\ensuremath \left[#1\right]}
\providecommand\cbrac[1]{\ensuremath \left\{#1\right\}}
\providecommand\ag[1]{\ensuremath \left\langle#1\right\rangle}
\theoremstyle{plain}
\newtheorem{theorem}{Theorem}
\newtheorem{lemma}[theorem]{Lemma}
\theoremstyle{definition}
\newtheorem{remark}[theorem]{Remark}
\renewenvironment{proof}[1][\textbf{\proofname}]{\normalfont
  \topsep6\p@\@plus6\p@ \trivlist
  \item[\hskip\labelsep
    #1\@addpunct{.}]\ignorespaces
}{%
  \qed\endtrivlist
}
\DeclareMathOperator*{\E}{\mathbb{E}}
\definecolor{color_skyblue}{rgb}{0.01,0.39,0.75}
\def \s {\sigma}
\def \th {\theta}
\def \a {\alpha}
\def \l {\lambda}
\def \XX {\mathcal{X}}
\def \FF {\mathcal{F}}
\def \OO {\mathcal{O}}
\def \reals {\mathbb{R}}
\newcommand{\heading}[1]{\smallskip \noindent \tbf{#1.}}
\def \alg {TraDE\xspace}
\def \hth {\widehat{\th}}
\def \d {\textrm{d}}
\def \mmd {\textrm{MMD}}
\def \power         {POWER\xspace}
\def \gas           {GAS\xspace}
\def \hepmass       {HEPMASS\xspace}
\def \miniboone     {MINIBOONE\xspace}
\def \bsds          {BSDS300\xspace}
\def \pendigits     {Pendigits\xspace}
\def \forest        {ForestCover\xspace}
\def \satimage      {Satimage-2\xspace}
\def \mnist         {MNIST\xspace}
\renewcommand{\cite}{\citep} 
\definecolor{LightGray}{gray}{0.9}
\def \papertitle {
\alg: Transformers for Density Estimation 
}
\title{\papertitle}
\author{Rasool Fakoor$^1$ \thanks{Correspondence to: Rasool Fakoor [fakoor@amazon.com]}, Pratik Chaudhari$^2$,  Jonas Mueller$^1$, Alexander J. Smola$^1$\\
$^1$ Amazon Web Services \\
$^2$ University of Pennsylvania\\
}
\newcommand{\beginsupplement}{ 
        \setcounter{section}{0}
        \renewcommand{\thesection}{S\arabic{section}} %
         \renewcommand{\thesubsection}{\thesection.\arabic{subsection}}
        \setcounter{table}{0}
        \renewcommand{\thetable}{S\arabic{table}} %
        \setcounter{figure}{0}
        \renewcommand{\thefigure}{S\arabic{figure}} %
     }
\begin{document}
\maketitle

\begin{abstract}
We present TraDE, a self-attention-based
architecture for auto-regressive density estimation with continuous and discrete valued  data. Our model is trained using a penalized maximum likelihood objective, which  ensures that samples from the density estimate resemble the training data distribution. The use of self-attention means that the model need not retain conditional sufficient statistics during the auto-regressive process beyond what is needed for each covariate. On standard tabular and image data  benchmarks, TraDE produces significantly better density estimates than existing approaches such as normalizing flow estimators and recurrent auto-regressive models. However log-likelihood on held-out data only partially reflects how useful these estimates are in real-world applications. In order to systematically evaluate density estimators, we present a suite of tasks such as regression using generated samples, out-of-distribution detection, and robustness to noise in the training data and demonstrate that TraDE works well in these scenarios. 
\end{abstract}

\section{Introduction}
\label{s:intro}

Density estimation involves estimating a probability density $p(x)$,
given independent, identically distributed (iid) samples from
it. This is a versatile and important problem as it allows one to
generate synthetic data or perform novelty and outlier detection. It
is also an important subroutine in applications of graphical models.
Deep neural
networks are a powerful function class and learning complex
distributions with them is promising. This has resulted in a
resurgence of interest in the classical problem of density estimation.

\begin{wrapfigure}{r}{0.50 \textwidth}
\vspace*{-1em}
\centering
\includegraphics[width=0.50\textwidth]{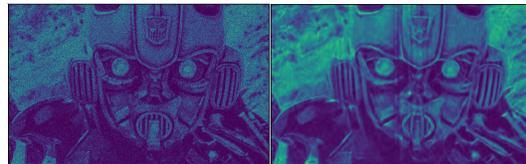}
\caption{\alg is well suited to density estimation of
  Transformers. Left: Bumblebee (true density), Right: density estimated from data.}
\label{fig:tran_bumb}
 \end{wrapfigure}

One of the more popular techniques for density estimation
is to sample data from a simple reference distribution and
then to learn a (sequence of) invertible transformations that allow us
to adapt it to a target distribution. Flow-based methods~\cite{durkan2019neural} employ this
with great success. A more classical approach is to decompose $p(x)$
in an iterative manner via conditional probabilities
$p(x_{i+1}|x_{1 \ldots i})$ and fit this distribution using the
data~\cite{murphy2013machine}. One may even employ implicit generative models to
sample from $p(x)$ directly, perhaps without the ability to compute
density estimates. This is the case with Generative Adversarial Networks (GANs) that reign supreme
for image synthesis via sampling~\cite{goodfellow2014generative,karras2017progressive}.

Implementing these above methods however requires special care, e.g., the
normalizing transform requires the network to be invertible with an
efficiently computable Jacobian. Auto-regressive models 
using recurrent networks are difficult to scale to high-dimensional
data due to the need to store a potentially high-dimensional
conditional sufficient statistic (and also due to vanishing gradients).
Generative models can be
difficult to train and GANs lack a closed density model. Much of the current
work
is devoted to mitigating these issues. The main contributions of this paper include: 

\begin{enumerate}[noitemsep,topsep=0pt,parsep=0pt,partopsep=0pt,
leftmargin=*]
\item
    We introduce \alg, a simple but novel auto-regressive density estimator that uses self-attention along with a recurrent neural network (RNN)-based input embedding 
    to approximate arbitrary continuous and discrete conditional densities.
    It is more flexible than contemporary architectures such as those proposed by \citet{durkan2019neural, durkan2019cubic, kingma2016improved, de2019block}
    for this problem, yet it remains capable of approximating \emph{any} density function. 
    To our knowledge, this is the first adaptation of Transformer-like architectures for continuous-valued density estimation. 

\item
    Log-likelihood on
    held-out data is the prevalent metric to evaluate density estimators.
    However, this only provides a partial view of their performance in real-world applications. We propose a suite of experiments to systematically evaluate the performance of density estimators in downstream tasks 
    such as classification and regression using generated samples, detection of out-of-distribution samples, and robustness to noise in the training data. 
\item We provide extensive empirical evidence that \alg substantially outperforms other density estimators on standard and additional benchmarks, along with thorough ablation experiments to dissect the empirical gains. 
The main feature of this work is the simplicity of our proposed method along with its strong (systematically evaluated) empirical performance.
\end{enumerate}

\section{Background and Related Work}
\label{s:bg}

Given a dataset $\cbrac{x^1, \ldots, x^n}$ where each sample $x^l \in
\reals^d$ is drawn iid from a distribution $p(x)$, the
maximum-likelihood formulation of density estimation
finds a $\th$-parameterized distribution $q$ with
\beq{
    \hth = \argmax_\th \f{1}{n} \sum_{l=1}^n \log q(x^l; \th).
    \label{eq:mle}
}
The candidate distribution $q$ can be parameterized in a variety of ways as we discuss next.

\tbf{Normalizing flows}
write $x \sim q$ as a transformation of samples $z$ from some base
distribution $p_z$ from which one can draw samples
easily~\cite{papamakarios2019normalizing}. If this mapping is $f_\th:
z \to x$, two distributions can be related
using the determinant of the Jacobian as
\[
    q(x; \th) := p_z(z)\ \Bigabs{\f{\d f_\th}{\d z} }^{-1}.
\]
A practical limitation of flow-based models is that $f_\th$ must be a
diffeomorphism, i.e., it is invertible and both $f_\th$ and
$f^{-1}_\th$ are differentiable. 
Good performance using normalizing flows
imposes nontrivial restrictions on how one can parametrize $f_\th$: it must be flexible yet invertible with a
Jacobian that can be computed efficiently. There are a number of
techniques 
to achieve this, e.g., linear mappings,
planar/radial flows~\cite{rezende2015variational,tabak2013family},
Sylvester flows~\cite{berg2018sylvester}, coupling~\cite{dinh2014nice}
and auto-regressive models~\cite{larochelle2011neural}. One may also
compose the transformations, e.g., using monotonic mappings $f_\th$ in
each layer~\cite{huang2018neural,de2019block}.

\tbf{Auto-regressive models} 
factorize the joint distribution as a product of univariate  conditional distributions  $ q(x; \th) := \prod_{i} q_i(x_i|x_1, \ldots x_{i-1}; \th).$ 
The auto-regressive approach to density estimation is straightforward and flexible as there is no restriction on how each conditional distribution is modeled. 

Often, a single recurrent neural network (RNN) is used to sequentially estimate all conditionals with a  shared set of parameters~\cite{oliva2018transformation,kingma2016improved}. 
For high-dimensional data, the challenge lies in handling the increasingly
large
state space $x_1, \ldots, x_{i-1}$ required to properly infer $x_i$. In recurrent auto-regressive models,
these conditioned-upon variables' values are  stored in some representation $h_i$ which is updated via
a function $h_{i+1} = g(h_i, x_i)$. This overcomes the problem of
 high-dimensional estimation, albeit at the expense of loss in
fidelity. Techniques like masking the computational paths in a
feed-forward network are popular to alleviate these problems
further~\cite{uria2016neural,germain2015made,papamakarios2017masked}. 

Many existing auto-regressive algorithms are highly sensitive to the variable ordering chosen for factorizing $q$, and some methods must train complex ensembles over multiple 
orderings to achieve good performance~\cite{germain2015made, uria2014deep}. 
While autoregressive models are commonly applied to natural language and time series data, this setting only involves variables that are already naturally ordered~\cite{chelba2013one}. 
In contrast, we consider continuous (and discrete) density estimation of vector
valued data, e.g.\ tabular data, where the underlying ordering and dependencies between variables is often unknown. 

\tbf{Generative models}
focus on drawing samples from the estimated distribution that look resemble the true distribution of data. There is a rich history of learning explicit models from variational inference~\cite{jordan1999introduction} that allow both drawing samples and estimating the log-likelihood or implicit models such as Generative Adversarial Networks (GANs, see~\cite{goodfellow2014generative}) where one may only draw samples. These have been shown to work well for natural images~\cite{kingma2013auto} but have not obtained similar performance for tabular data. We also note the existence of many classical techniques that are less popular in deep learning, such as kernel density estimation~\cite{silverman2018density} and Chow-Liu trees~\cite{chow1968approximating,choi2011learning}.

\section{Tools of the \alg}
\label{s:approach}

\begin{wrapfigure}{r}{0.45\textwidth}
\begin{center}
\vspace*{-4em}
\resizebox{0.45 \textwidth}{!}{
\begin{tikzpicture}[state/.style={shape=circle, draw=black, fill=LightGray, minimum size=1.2cm,font=\fontsize{12pt}{14pt}\selectfont}]
    \node[state] (1) at (0,0) {$x_1$};
    \node[state] (2) at (1.5,0) {$x_2$};
    \node[state] (3) at (3,0) {$x_{3}$};
    \node[state] (4) at (4.5,0) {$x_{4}$};
    \node[state] (5) at (6,0) {$x_{5}$};
     \node[state] (6) at (7.5,0) {$x_{6}$} ;
     \node[state] (7) at (9,0) {$x_{7}$} ;
     \node[state] (8) at (10.5,0) {$x_8$} ;

    \node[state] (11) at (0,-2) {$x_1$};
    \node[state] (12) at (1.5,-2) {$x_8$};
    \node[state] (13) at (3,-2) {$x_{2}$};
    \node[state] (14) at (4.5,-2) {$x_{7}$};
    \node[state] (15) at (6,-2) {$x_{3}$};
     \node[state] (16) at (7.5,-2) {$x_{6}$} ;
     \node[state] (17) at (9,-2) {$x_{4}$} ;
     \node[state] (18) at (10.5,-2) {$x_5$} ;

%

    \draw (1) edge[looseness=0.75] (8);
    \draw (2) edge[looseness=0.75] (8);
    \draw (2) edge[looseness=0.75] (7);
    \draw (3) edge[looseness=0.75] (7);
    \draw (3) edge[looseness=0.75] (6);
    \draw (4) edge[looseness=0.75] (6);
    \draw (4) edge[looseness=0.75] (5);

    \path [-] (11) edge node[left] {} (12);
    \path [-] (12) edge node[left] {} (13);
    \path [-] (13) edge node[left] {} (14);
    \path [-] (14) edge node[left] {} (15);
    \path [-] (15) edge node[left] {} (16);
    \path [-] (16) edge node[left] {} (17);
    \path [-] (17) edge node[left] {} (18);
\end{tikzpicture}
}
\end{center}
\end{wrapfigure}

Consider the $8$-dimensional Markov Random Field shown here, where the
underlying graphical model is unknown in practice. Consider the
following two orders in which to factorize the autoregressive model:
$(1,2,3,4,5,6,7,8)$ and $(1,8,2,7,3,6,4,5)$. In the latter case the
model becomes a simple sequence where e.g.\
$p(x_3|x_{1,8,2,7}) = p(x_3|x_7)$ due to conditional independence.
A latent variable auto-regressive model
only needs to preserve the most recently encountered state in this latter ordering. In the first ordering,
$p(x_3|x_{1,2})$ can be simplified further to $p(x_3|x_2)$, but we still need to carry the precise value of $x_1$ along until the end since
$p(x_8|x_{1 \ldots 7}) = p(x_8|x_{1,2})$. This is a fundamental
weakness in models employing RNNs such
as~\cite{oliva2018transformation}. In practice, we may be unable to select a favorable ordering for columns in a table (unlike for language where words are  inherently ordered), especially as the underlying distribution is unknown.

\subsection{Vertex Ordering and  Sufficient Statistics}

The above problem is also seen in sequence modeling,  and Transformers were  introduced to better model such long-range dependencies through self-attention~\cite{vaswani2017attention}. 
A recurrent network can, in principle, absorb this information into its
hidden state. In fact, Long-Short-Term Memory
(LSTM)~\cite{hochreiter1997long} units were engineered specifically
to store long-range dependencies until needed. Nonetheless, storing
information costs parameter space.
For an auto-regressive factorization where the true conditionals require one to store many variables' values for many time steps, the RNN/LSTM hidden state must be undesirably large to achieve low approximation error, which means these models must have many parameters~\cite{collins17}. 
The following simple lemma formalizes this.
\begin{lemma}
Denote by $G$ the graph of an undirected graphical model over random variables $x_1, \ldots, x_d$.
Depending on the order vertices are traversed in our factorization
the largest number of latent variables a recurrent auto-regressive model needs to store is bounded from above and below by the minimum and the maximum number of variables with a cut edge of the graph $G$.
\end{lemma}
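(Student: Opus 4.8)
The plan is to make precise the informal notion of ``variables the recurrent model must store'' and to identify it, step by step, with the cut (or frontier) induced by the traversal order. Fix a traversal order, i.e.\ a permutation $\pi$ of the vertices, and let $S_i = \{\pi_1, \dots, \pi_i\}$ denote the set of vertices already visited after $i$ steps, with $S_0 = \emptyset$ and complement $\bar S_i$. When the model is about to emit the conditional for $x_{\pi_{i+1}}$, its hidden state $h_{i+1}$ is a deterministic function of the already-observed values $x_{S_i}$, and it must be a sufficient statistic for predicting every remaining variable, i.e.\ for the conditional law of $x_{\bar S_i}$ given $x_{S_i}$. I would therefore \emph{define} the number of latent variables stored at this step as the size of the smallest $T \subseteq S_i$ with $x_{\bar S_i} \indep x_{S_i} \given x_T$; this is precisely the number of covariate values the minimal sufficient statistic must retain.

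First I would identify this minimal subset with the frontier $\partial S_i := \{u \in S_i : \exists\, v \in \bar S_i,\ (u,v) \in E\}$, the visited vertices incident to a cut edge. For sufficiency, observe that $\partial S_i$ separates the interior $S_i \setminus \partial S_i$ from $\bar S_i$ in $G$: any path leaving $S_i$ must use an edge crossing the cut, whose endpoint inside $S_i$ lies in $\partial S_i$. The global Markov property of the undirected model then yields $x_{S_i \setminus \partial S_i} \indep x_{\bar S_i} \given x_{\partial S_i}$, so retaining only the frontier values reproduces every future conditional and the model need store at most $|\partial S_i|$ variables. For necessity, each $u \in \partial S_i$ has a neighbour $v \in \bar S_i$, so the edge $(u,v)$ makes $x_u$ directly relevant to the conditional of $x_v$; under the standard assumption that edges of $G$ encode genuine (non-vanishing) dependence, no proper subset of $\partial S_i$ can screen off the future, so exactly $|\partial S_i|$ values must be kept.

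With the per-step count pinned down as $|\partial S_i|$, the largest number the model must ever hold during the sweep is $M(\pi) = \max_{0 \le i < d} |\partial S_i|$, the peak frontier size for order $\pi$. Varying the traversal order then gives the two-sided bound: for any fixed order $\pi_0$ we have $\min_{\pi} M(\pi) \le M(\pi_0) \le \max_{\pi} M(\pi)$, i.e.\ the peak storage is bounded below by the minimum and above by the maximum, over all orderings, of the number of visited variables incident to a cut edge. The two orderings of the running $8$-vertex example realize the two extremes and can be cited as an illustration.

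The step I expect to be the main obstacle is the necessity direction, together with giving a clean definition of ``number of variables stored.'' The sufficiency half is a direct application of graph separation plus the global Markov property, but the lower bound must rule out accidental cancellations: an edge present in $G$ has to correspond to a real conditional dependence, so I would either invoke a faithfulness-type assumption or state the necessity claim as a worst case over distributions compatible with $G$. I would also handle the boundary indices carefully (the frontier is empty at $i = 0$ and $i = d$) so that the maximum is taken over the genuinely informative steps.
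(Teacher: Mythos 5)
Your proof takes essentially the same route as the paper's: the core step in both is to identify the variables that must be retained at each step with the Markov blanket of the unvisited set $C$ inside the visited set $S$, note that by definition this blanket consists exactly of the visited vertices incident to a cut edge, and invoke $p(x_C \mid x_S) = p(x_C \mid x_M)$ via the global Markov property. The only difference is one of completeness rather than of method: the paper stops at this sufficiency argument ("we are done"), whereas you also supply the necessity direction under a faithfulness-type assumption and handle the maximum over steps and the min/max over orderings explicitly --- details the paper leaves implicit but which your version correctly identifies as the delicate part.
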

\begin{proof}
\vskip -1em%
Given a subset of known variables $S \subseteq \{1,
  \ldots d\}$ we want to estimate the conditional distribution of the
  variables on the complement $C := \{1, \ldots d\} \backslash S$. For
  this we need to decompose $S$ into the Markov
  blanket $M$ of $C$ and its remainder. By definition $M$ consists of
  the variables with a cut edge. Since $p(x_C|x_S) =
  p(x_C|x_M)$ we are done.
  \end{proof}
\vskip -1em
This problem with long-dependencies in auto-regressive models has been
noted before, and is exacerbated in density estimation with continuous-valued data, where values of conditioned-upon variables may need to be precisely retrieved. 
For instance, recent auto-regressive models employ
masking to eliminate the sequential operations of recurrent
models~\cite{papamakarios2017masked}. There are also models like
Pixel RNN~\cite{oord2016pixel} which explicitly design a multi-scale
masking mechanism suited for natural images (and also require discretization of continuous-valued data). 
Note that while there is a natural ordering of random variables in text/image data, variables in tabular data do not follow any canonical ordering. 

Applicable to both continuous and discrete valued data, our proposed \alg model circumvents this issue by utilizing self-attention to retrieve feature values relevant for the
conditioning, which does \emph{not} require more parameters to retrieve the values of relevant features that happened to appear early in the auto-regressive factorization order \cite{yun2019transformers}. Thus a major benefit of self-attention here 
is its effectiveness at maintaining an accurate representation of the feature values $x_j$ for $j < i$ when inferring $x_{i}$ (irrespective of the
distance between $i$ and $j$).

\subsection{The Architecture of \alg}
\label{ss:attention}

\alg is an auto-regressive density estimator and factorizes the distribution $q$ as
\beq{
    q(x; \th) := \prod_{i=1}^d \ q_i(x_i|x_1,\ldots,x_{i-1}; \th);
    \label{eq:autoreg}
}
Here the $i^{\trm{th}}$ univariate conditional $q_i$ conditions the feature $x_i$ upon the features preceding it, and may be easier to model than a high-dimensional joint distribution.  
Parameters $\th$ are shared amongst conditionals. Our main
observation is that auto-regressive conditionals can be accurately modeled using the attention mechanism in a Transformer architecture~\cite{vaswani2017attention}.

\heading{Self-attention} The Transformer is a neural sequence transduction model and consists of a multi-layer encoder/decoder pair. We only need the encoder for building \alg. The encoder takes the input sequence $(x_1,\ldots,x_d)$ and predicts the $i^{\trm{th}}$-conditional $q_i(x_i|x_1,\ldots,x_{i-1}; \th)$. For discrete data, each conditional is parameterized as a categorical distribution. For continuous data, each conditional distribution is parametrized as a mixture of Gaussians, where the mean/variance/proportion of each mixture component depend on $x_1,\ldots,x_{i-1}$:
\beq{
    q_i(x_i|x_1,\ldots,x_{i-1}; \th) = \sum_{k=1}^m \pi_{k,i}\ N(x_i; \mu_{k,i}, \s_{k,i}^2).
    \label{eq:mog}
}
with the mixture proportions $\sum_{k=1}^m \pi_{k,i} = 1$. All three of $\pi_{k,i}, \mu_{k,i}$ and $\s_{k,i}$ are predicted by the model as separate outputs at each feature $i$. They are parametrized by $\th$ and depend on $x_1,\ldots,x_{i-1}$. 

The crucial property of the Transformer's encoder is the self-attention module outputs a representation that captures correlations in different 
parts of its input. In a nutshell, the self-attention map outputs $z_i = \sum_{j=1}^d \a_j \varphi(x_j)$ where $\varphi(x_j)$ is an embedding of the feature $x_j$ and normalized weights $\a_j = \ag{\varphi(x_i,\ \varphi(x_j)}$ compute the similarity between the embedding of $x_i$ and that of $x_j$. Self-attention therefore amounts to a linear combination of the embedding of each feature with features that are more similar to $x_i$ getting a larger weight. We also use multi-headed attention like~\cite{vaswani2017attention} which computes self-attention independently for different embeddings. Self-attention is the crucial property that allows \alg to handle long-range and complex correlations in the input features; effectively this eliminates the vanishing gradient problem in RNNs by allowing direct connections between far away input neurons~\cite{vaswani2017attention}. Self-attention also enables permutation equivariance and naturally enables \alg to be agnostic to the ordering of the features.

\emph{Masking} is used to prevent $x_i,x_{i+1},\ldots,x_d$ from taking part in the computation of the output $q_i$ and thereby preserve the 
auto-regressive property of our density estimator. We keep residual 
connections, layer normalization and dropout in the encoder unchanged 
from the original architecture of~\cite{vaswani2017attention}. The 
final output layer of our TraDE model, is a 
position-wise fully-connected layer where, for continuous data, the $i^{\trm{th}}$ position outputs the mixing proportions $\pi_{k,i}$, means $\mu_{k,i}$ and standard deviations $\s_{k,i}$ that together specify a Gaussian mixture model. For discrete data, this final layer instead outputs the parameters of a categorical distribution.

\tbf{Positional encoding} involves encoding the position $k$ of 
feature $x_k$ as an additional input, and is required to ensure that Transformers can identify which values correspond to which inputs -- information that is otherwise lost in self-attention. To circumvent this issue, \citet{vaswani2017attention} append Fourier position features to each input. Picking hyper-parameters for the frequencies is however difficult and it does not work well either for density estimation (see \cref{s:expt}). An 
alternative that we propose is to use a simple recurrent network at the input to embed the input values at each position.
Here the time-steps of the RNN implicitly encode the  positional information, and we use a Gated
Recurrent Unit (GRU) model to better handle long-range
dependencies~\cite{cho2014learning}. This parallels recent
findings from language modeling where~\cite{wang2019language} also
used an initial RNN embedding to generate inputs to the transformer.
Observe that the GRU does not slow down \alg at inference time since
sampling is performed in auto-regressive fashion and remains $\OO(d)$.
Complexity of training is marginally higher but this is more than made up
for by the superior performance.

\begin{remark}[Compared to other methods,
\alg is simple and effective]
Our architecture for \alg can be summarized simply
as using the encoder of a 
Transformer~\cite{vaswani2017attention} with appropriate masking to achieve auto-regressive dependencies with an output layer consisting of a mixture of multi-variate Gaussians and an input embedding layer built using an RNN.
And yet it is more flexible than architectures for 
normalizing flows 
without restrictive constraints on the input-output map (e.g. invertible functions, Jacobian computational costs, etc.) As compared to other 
auto-regressive models, \alg can handle long-range dependencies and does not need to permute input features during training/inference like \citet{germain2015made, uria2014deep}. 
Finally, the objective/architecture of \alg is general enough to handle both continuous and discrete data, unlike many existing density estimators. As experiments in~\cref{s:expt} shows these properties make \alg very well-suited for auto-regressive density estimation.

Unlike discrete language data, tabular datasets contain many numerical values and their categorical features do not share a common vocabulary. Thus existing applications of Transformers to such data remain limited. 
More generally, successful applications of Transformer models to continuous-valued data as considered here have remained rare to date (without coarse discretization as done for images in \citet{parmar2018image}), limited to a few applications in speech \cite{NIPS2019_8580} and time-series \cite{NIPS2019_8766, benidis2020neural,wu2020deep}. 

Beyond Transformers' insensitivity to feature order when conditioning on variables, they are a good fit for tabular data because their lower layers model lower-order
feature interactions (starting with pairwise interactions in the first layer and building up in each additional layer). 
This relates them to ANOVA models \citep{wahba1995smoothing} which only progressively blend features unlike in the fully-connected layers of feedforward neural networks.
\end{remark}

\begin{lemma}
\alg can approximate any continuous density $p(x)$ on a compact domain $x \in \mathcal{X} \subset \mathbb{R}^d$.
\end{lemma}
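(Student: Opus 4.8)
The plan is to combine three essentially independent facts: the autoregressive factorization in \cref{eq:autoreg} is \emph{exact}, finite Gaussian mixtures are universal density approximators, and the Transformer encoder is a universal approximator of continuous functions on compact sets \citep{yun2019transformers}. First I would observe that the chain rule of probability makes $p(x) = \prod_{i=1}^d p(x_i \mid x_1,\ldots,x_{i-1})$ an identity, so the autoregressive decomposition itself incurs no error. It therefore suffices to approximate each true conditional $p(x_i \mid x_{<i})$ by the model conditional $q_i$ of \cref{eq:mog}, uniformly over the compact domain $\XX$, and then to control how these per-conditional errors accumulate in the product.

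Second, for each fixed conditioning value I would invoke the classical density result that finite Gaussian mixtures are dense (in $L^1$, or in sup norm on compact intervals) in the space of continuous densities: with enough components $m$ and suitable parameters $(\pi_{k,i},\mu_{k,i},\s_{k,i})$, the mixture in \cref{eq:mog} matches $p(x_i \mid x_{<i})$ to any tolerance. Because $\XX$ is compact and $p$ is continuous, and restricting to the interior of the support where $p$ is bounded below, the conditionals vary continuously with $x_{<i}$, so the map $x_{<i} \mapsto (\pi_{k,i},\mu_{k,i},\s_{k,i})$ that realizes the approximation can be chosen continuous.

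Third, I would apply the universal approximation theorem for Transformers \citep{yun2019transformers}, which guarantees that the masked self-attention encoder approximates any continuous sequence-to-sequence map to arbitrary sup-norm accuracy on a compact set. Applying this to the target parameter map of the previous step lets the network emit mixture parameters uniformly close to the ideal ones; composing with the Lipschitz dependence of a Gaussian mixture on its parameters (over bounded parameter ranges) transfers this into uniform closeness of $q_i$ to $p(x_i \mid x_{<i})$. The RNN input embedding supplies positional information so that the encoder can in fact realize maps that distinguish feature positions.

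Finally I would assemble the estimates: a telescoping argument bounds $\abs{p(x) - q(x;\th)}$ by a sum of per-conditional errors, each multiplied by products of the remaining (bounded) conditionals, so uniform control of every factor yields uniform control of the joint. I expect this last error-propagation step to be the main obstacle: near the boundary of the support, Gaussian tails and vanishing mixture weights can make the \emph{relative} error of a single conditional blow up, so the clean combination requires either bounding $p$ away from zero on a slightly shrunk compact subdomain or carrying out the accumulation in a metric (total variation or $\kl$) where it telescopes gracefully. I would state that choice explicitly and verify the accumulated bound shrinks to zero as the per-conditional tolerances and the number of mixture components are taken to their limits.
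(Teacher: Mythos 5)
Your proposal follows essentially the same route as the paper's proof: approximate each autoregressive conditional by a Gaussian mixture with sufficiently many components, let the GRU input layer supply the positional encodings (the paper cites the universal approximation capability of RNNs here), and invoke the constructive result of \citet{yun2019transformers} (their Theorem~3) to show the masked Transformer can realize the map from prefixes $(x_1,\dots,x_{i-1})$ to the mixture parameters $(\pi_{k,i},\mu_{k,i},\s_{k,i})$. Your additional fourth step---the telescoping error-propagation bound and the caveat about conditionals degenerating where the marginal of $x_{<i}$ vanishes, handled by shrinking the domain or measuring error in total variation---is sound extra rigor on points the paper's sketch leaves implicit, rather than a genuinely different approach.
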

\begin{proof}
\vspace*{-1em}
Consider any auto-regressive factorization of $p(x)$. Here the $i^{\text{th}}$ univariate distribution ${p(x_{i} \mid x_1,\dots, x_{i-1})}$ can be approximated arbitrarily well by a Gaussian mixture $ \sum_{k=1}^m \pi_{k,i}\ N(x_i; \mu_{k,i}, \s_{k,i}^2)$, assuming a sufficient number of mixture components $m$ \cite{sorenson1971recursive}. 
By the universal approximation capabilities of RNNs \cite{siegelmann1995computational}, the GRU input layer of \alg can generate nearly any desired positional encodings that are input to the initial self-attention layer. 
Finally, \citet{yun2019transformers} establish that a Transformer architecture with positional-encodings, self-attention, and position-wise fully connected layers can approximate any continuous sequence-to-sequence function over a compact domain. Their 
constructive proof of this result (Theorem 3 in \citet{yun2019transformers}) guarantees that, even with our use of masking to auto-regressively restrict information flow, 
the remaining pieces of the TraDE model can approximate the mapping $\displaystyle \{ (x_1,\dots, x_{i-1}) \}_{i=1}^d \rightarrow \{ (\pi_{k,i}, \mu_{k,i}, \s_{k,i}^2) \}_{i=1}^d$.
\end{proof}

\subsection{The Loss Function of \alg}
\label{ss:trade}
The MLE objective in~\cref{eq:mle} does not have a
regularization term to deal with limited data. 
Furthermore, MLE-training only drives our network to consider how it is modeling each conditional distribution in (\ref{eq:autoreg}), rather than how it models the full joint distribution $p(x)$ and the quality of samples drawn from its joint distribution estimate $q(x; \widehat{\th})$. 
To encourage both, we combine the MLE objective with a regularization penalty based on the Maximum Mean Discrepancy (MMD) to get the loss function of \alg.
The former ensures consistency of the estimate while the MMD term is effective
in detecting obvious discrepancies when the samples drawn from the model do not
resemble samples in the training dataset (details in~\cref{ss:mmd}). The objective minimized in
\alg is a penalized likelihood: 
\beq{
   L(\th) = -\f{1}{n d} \sum_{l=1}^n \sum_{i=1}^d \log q_i(x^l_i|x^l_1,\ldots,x^l_{i-1}; \th) + \l\  \mmd^2(\widehat{p}(x), q(x; \th))
    \label{eq:trade}
}
where hyper-parameter $\l \geq 0$ controls the degree of 
regularization and $\widehat{p}$ denotes the empirical data distribution.
This objective is minimized using stochastic gradient methods, where the gradient of the log-likelihood term can be computed using
standard back-propagation. Computing the gradient of the MMD term
involves differentiating the
samples from $q_\th$ with respect to the parameters $\th$. For
continuous-valued data, this is easily done using the reparametrization
trick~\cite{kingma2013auto} since we model 
each conditional as a mixture of Gaussians. For categorical features, we calculate the gradient using the Gumbel softmax trick~\cite{maddison2016concrete}.
The \alg is thus general enough to handle both continuous and discrete data, unlike many existing density estimators.

We also show  experiments in~\cref{ss:ablation_appendix}
where MMD regularization
helps density estimation with limited data (beyond using other standard regularizers like weight decay and dropout); this is common
in biological applications, where experiments are often too
costly to be extensively replicated~\citep{Krishnaswamy1250689, Chen2020Mengjie}. 
Finally we note that there exist a large number of classical techniques, such as maximum entropy and approximate moment matching techniques~\citep{phillips2004maximum,altun2006unifying} for regularization in density estimation; MMD is conceptually close to moment matching.

\section{Experiments}
\label{s:expt}

\begin{wrapfigure}{r}{0.45\textwidth}
\centering
\includegraphics[width=0.45\textwidth]{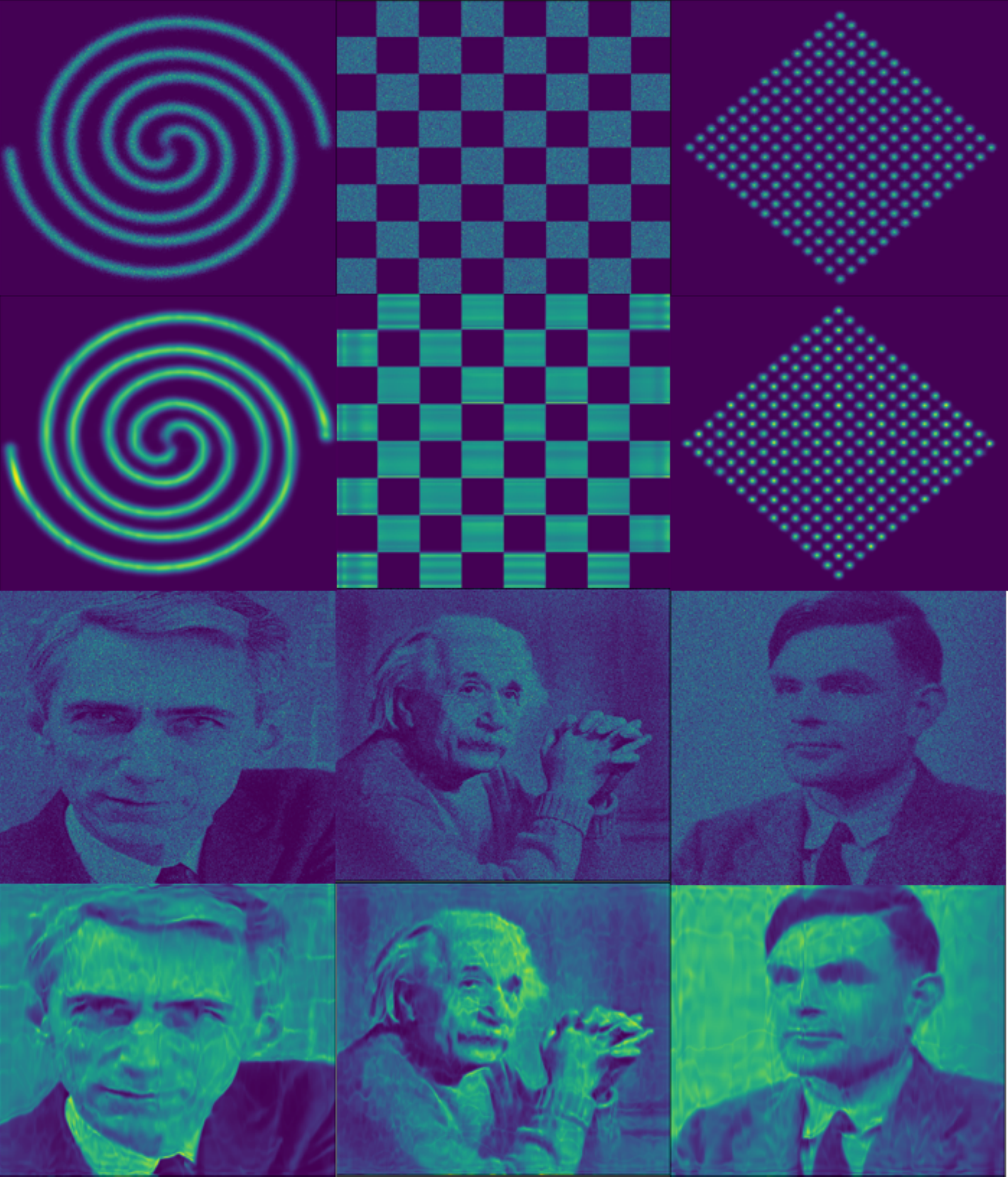}
\caption{\tbf{Qualitative evaluation on 2-dimensional datasets. }
We train \alg on samples from six 2-dimensional densities and evaluate the model likelihood over the entire domain by sampling a fine grid; original densities are shown in rows 1 \& 3 and estimated densities are shown in rows 2 \& 4. This setup is similar to~\cite{nash2019autoregressive}. These distributions are highly multi-modal with complex correlations but \alg learns an accurate estimate of the true density
across a large number of examples. 
}
\label{fig:toy}
\vskip -3em
\end{wrapfigure}

We first evaluate \alg both qualitatively (\cref{fig:toy} and~\cref{fig:mnist_10} in the Appendix) and quantitatively on standard
benchmark
datasets (\cref{ss:expt:benchmarks}).
We then present three additional ways to evaluate
the performance of density estimators in downstream tasks
(\cref{ss:expt:eval}) along with some ablation studies
(\cref{ss:ablation_main} and \cref{sec:only_mmd_exp}). Details for all the experiments in this section, including hyper-parameters, are provided in the \cref{s:app:hp_benchmark}.

\begin{remark}[Reporting log-likelihood for density estimators]
The objective in~\cref{eq:mle} is a maximum likelihood
objective and therefore the maximizer need not be close
to $p(x)$. 
This is often ignored in the current literature
and algorithms are compared based on their log-likelihood
on held-out test data.
However, the log-likelihood may be insufficient to ascertain the
real-world performance of these models, e.g., in terms of verisimilitude of the data they
generate. 
This is a major motivation for us to develop a complementary evaluation
methodologies in~\cref{ss:expt:eval}. We emphasize \tbf{these
evaluation methods are another novel contribution of our paper}.
However, we also report
log-likelihood results for comparison with others.
\end{remark}

\subsection{Results on Benchmark Datasets}
\label{ss:expt:benchmarks}

We follow the experimental setup
of~\cite{papamakarios2017masked}
to ensure the same training/validation/test dataset splits in our
evaluation. In particular, preprocessing of all datasets is kept the same as that
of~\cite{papamakarios2017masked}. The datasets named \power,
\gas~\cite{vergara2012chemical}, \hepmass, \miniboone \xspace and
\bsds~\cite{MartinFTM01} were taken from the UCI machine learning
repository~\cite{Dua:2019}.

\begin{wraptable}{l}{0.35 \textwidth}
\renewcommand{\arraystretch}{1.2}
\caption{\tbf{Negative average test log-likelihood in nats (smaller is better) on binarized MNIST.}}
\label{tab:mnist}
\begin{small}
\begin{sc}
\resizebox{\linewidth}{!}{
\begin{tabular}{p{2.5cm} r}
& Log-likelihood\\
\toprule
VAE           & 82.14 $\pm$ 0.07\\
Planar Flows   & 81.91 $\pm$ 0.22\\
IAF            & 80.79 $\pm$ 0.12\\
Sylvester      & 80.22 $\pm$ 0.03\\
Block NAF      & 80.71 $\pm$ 0.09\\
PixelRNN       & 79.20 \\
\alg (ours)    & \tbf{78.92 $\pm$ 0.00}\\
\bottomrule
\end{tabular}
}
\end{sc}
\end{small}
\end{wraptable}

The \mnist dataset~\cite{lecun1990handwritten} is used to evaluate \alg on high-dimensional image-based data. We follow the variational inference literature, e.g.,~\cite{oord2016pixel}, and use the binarized version of \mnist.
The datasets for anomaly detection tasks, namely \pendigits, \forest  and \satimage 
are from the Outlier Detection DataSets (OODS) library~\cite{rayana16}. We normalized the OODS data by subtracting the per-feature mean and dividing by the standard deviation. We show the results on benchmark
datasets in~\cref{tab:benchmark}. There is a wide diversity in the
algorithms for density estimation but we make an effort to provide a
complete comparison of known results irrespective of the specific
methodology. Some methods like Neural Spline Flows (NSF) by
\cite{durkan2019neural} are quite complex to implement; others like
Masked Autoregressive Flows (MAF)~\cite{papamakarios2017masked} use
ensembles to estimate the density; some others like Autoregressive
Energy Machines (AEM) of~\cite{nash2019autoregressive} average the
log-likelihood over a large number of importance samples. As the table
shows, \alg obtains performance improvements over all
these methods in terms of the log-likelihood. This performance is
persistent across all datasets except \miniboone where \alg is
competitive although not the best. The improvement is drastic for the
\power, \hepmass and \bsds.

\begin{table*}[!t]
\renewcommand{\arraystretch}{1.4}
\caption{\tbf{Average test log-likelihood in nats (higher is better) for benchmark datasets.} Entries marked with $^*$ evaluate standard deviation across 3 independent runs of the algorithm; all others are mean $\pm$ standard error across samples in the test dataset.
\alg achieves significantly better log-likelihood than other algorithms on all datasets except \miniboone. 
}
\label{tab:benchmark}
\vskip -2em
\begin{center}
\begin{sc}
\resizebox{0.95 \textwidth}{!}{
\begin{tabular}{p{6cm} rrrrr}
& \power & \gas & \hepmass & \miniboone & \bsds\\
\toprule
Real NVP~\cite{dinh2016density}
& 0.17 $\pm$ 0.01 & 8.33 $\pm$ 0.14 & -18.71 $\pm$ 0.02 & -13.84 $\pm$ 0.52 & 153.28 $\pm$ 1.78\\

MADE MOG~\cite{germain2015made}
& 0.4 $\pm$ 0.01 & 8.47 $\pm$ 0.02 & -15.15 $\pm$ 0.02 & -12.27 $\pm$ 0.47 & 153.71 $\pm$ 0.28\\

MAF MOG~\cite{papamakarios2017masked}
& 0.3 $\pm$ 0.01 & 9.59 $\pm$ 0.02 & -17.39 $\pm$ 0.02 & -11.68 $\pm$ 0.44 & 156.36 $\pm$ 0.28\\

FFJORD~\cite{grathwohl2018ffjord}
& 0.46 $\pm$ 0.00 & 8.59 $\pm$ 0.00 & -14.92 $\pm$ 0.00 & -10.43 $\pm$ 0.00 & 157.4 $\pm$ 0.00 \\

NAF$^*$~\cite{huang2018neural}
& 0.62 $\pm$ 0.01 & 11.96 $\pm$ 0.33 & -15.09 $\pm$ 0.4 & \tbf{-8.86 $\pm$ 0.15} & 157.43 $\pm$ 0.3\\

TAN~\cite{oliva2018transformation}
& 0.6 $\pm$ 0.01 & 12.06 $\pm$ 0.02 & -13.78 $\pm$ 0.02 & -11.01 $\pm$ 0.48 & 159.8 $\pm$ 0.07\\

BNAF$^*$~\cite{de2019block}
& 0.61 $\pm$ 0.01 &  12.06 $\pm$ 0.09 &  -14.71 $\pm$ 0.38 &  -8.95 $\pm$ 0.07 & 157.36 $\pm$ 0.03\\

NSF$^*$~\cite{durkan2019neural}
& 0.66 $\pm$ 0.01 &  13.09 $\pm$ 0.02 &  -14.01 $\pm$ 0.03 &  -9.22 $\pm$ 0.48 & 157.31 $\pm$ 0.28\\

AEM~\cite{nash2019autoregressive}
& 0.70 $\pm$ 0.01 &  13.03 $\pm$ 0.01 &  -12.85 $\pm$ 0.01 &  -10.17 $\pm$ 0.26 & 158.71 $\pm$ 0.14\\
\midrule

\alg$^*$ (ours) & \tbf{0.73 $\pm$ 0.00} & \tbf{13.27 $\pm$ 0.01} & \tbf{-12.01 $\pm$ 0.03} &  -9.49 $\pm$ 0.13 & \tbf{160.01 $\pm$ 0.02}\\
\bottomrule
\end{tabular}
}
\end{sc}
\end{center}
\end{table*}

We also evaluate \alg on the MNIST dataset in terms of the log-likelihood on test data. As~\cref{tab:mnist} shows \alg obtains high log-likelihood even compared to sophisticated models such as Pixel-RNN~\cite{oord2016pixel}, VAE~\cite{kingma2013auto}, Planar Flows~\cite{rezende2015variational}, IAF~\cite{kingma2016improved}, Sylvester~\cite{berg2018sylvester}, and Block NAF~\cite{de2019block}. This is a difficult dataset for density estimation because of its high dimensionality. We also show the quality of the samples generated by our model in~\cref{fig:mnist_10}. 

\subsection{Ablation Study}
\label{ss:ablation_main}

\begin{wraptable}{r}{0.6 \textwidth}
\centering 
\vspace*{-3em}
\renewcommand{\arraystretch}{1.25}
\caption{\tbf{Average test log-likelihood in nats (higher is better) on benchmark datasets for ablation experiments.}
}
\label{tab:ablation1}
\begin{center}
\begin{small}
\resizebox{\linewidth}{!}{
\begin{tabular}{p{4cm} rrrrr}
& \power & \gas & \hepmass & \miniboone & \bsds\\
\toprule
RNN
& 0.51 & 6.26 & -15.87 & -13.13 & 157.29\\

Transformer (without position encoding)
& 0.71 & 12.95 & -15.80 & -22.29 & 134.71\\

Transformer (with position encoding)
& 0.73 & 12.87 & -13.89 & -12.28 & 147.94\\

\alg without MMD
& 0.72 & 13.26 & -12.22 & -9.44& 159.97\\

\bottomrule
\end{tabular}
}
\end{small}
\end{center}
\end{wraptable}

We begin with an RNN trained as an auto-regressive density
estimator; the performance of this basic model in~\cref{tab:ablation1}
is quite poor for all datasets.
Next, we use a Transformer network trained in auto-regressive
fashion without position encoding; this leads to significant improvements
compared to the RNN but not on all datasets. Compare this to the third row
which shows the results for Transformer with position encoding
(instead of the GRU-based embedding in \alg); this performs much better than a Transformer architecture without position encoding. This suggests that incorporating the information about the position is critical for auto-regressive models (also see~\cref{s:approach}).
A large performance gain on all datasets is observed upon using a GRU-based embedding
that replaces the position embedding of the Transformer.

Note that although the MMD regularization helps produce a
sizeable improvement for \hepmass, this effect is not consistent
across all datasets. This is akin to
any other regularizer; regularizers do not
generally improve performance on all datasets and models.
MMD-penalization of the MLE objective additionally helps obtain
higher-fidelity samples that are similar to those in the training
dataset; this helps when few data are available for density estimation
which we study in~\cref{ss:ablation_appendix}.
This is also evident in the regression experiment
(\cref{tab:sample_quality}) where the MSE of \alg is
significantly better than
MLE-based RNNs and Transformers trained without the MMD term. 

\subsection{Systematic Evaluation of Density Estimators}
\label{ss:expt:eval}

We propose evaluating density estimation in four canonical ways,
regression using the generated samples, a two-sample test to check the
quality of generated samples,
out-of-distribution detection, and robustness of the density estimator to
noise in the training data. This
section shows that \alg performs well on these tasks which demonstrates that it
not only obtains high log-likelihood on held-out data but can also be readily
used for downstream tasks.

\heading{1. Regression using generated samples}
We first characterize the quality of samples generated by the
model, based on a regression task where $x_d$ is regressed using data from the others $(x_1,\ldots,x_{d-1})$. 

The procedure is as follows: first we use the training set of the \hepmass ($d=21$) to fit the density estimator, and then create a synthetic dataset with both inputs $z = (x_1,\ldots,x_{d-1})$ and targets $y=x_d$ sampled from the model. Two random forest regressors are fitted, one on the real data and another on this synthetic data. These regressors are tested on real test data from \hepmass. If the model synthesizes good samples, we expect that the test performance of the regressor fitted on synthetic data would be comparable to that of the regressor fitted on real data.
\cref{tab:sample_quality} shows the results. Observe that the classifier trained on data synthesized by \alg performs very similarly to the one trained on the original data. The MSE of a RNN-based auto-regressive density estimator, which is higher, is provided for comparison. The MSE of a standard Transformer with position embedding is much worse at 1.37.
\begin{wraptable}{r}{0.3 \textwidth}
\vspace*{4.5mm}
\centering
\renewcommand{\arraystretch}{1}
\caption{\tbf{Mean squared error of regression on \hepmass.}}
\label{tab:sample_quality}
\begin{small}
\begin{tabular}{lr}
\toprule
Real data & 0.773\\
\alg & 0.780\\
RNN & 0.803\\
Transformer & 1.37\\
\bottomrule
\end{tabular}
\end{small}
\end{wraptable}

\heading{2. Two-sample test on the generated data}
A standard way to ascertain the quality of the generated samples is to perform
a two-sample test~\cite{wasserman2006all}. The idea is similar to a two-sample test~\cite{lopez2016revisiting} in the discriminator of a GAN: if the samples
generated by the auto-regressive model are good, the discriminator should have an accuracy of 50\%.
We train a density estimator on the training data; then fit a random forest
classifier to differentiate between real validation data and synthesized validation data; then compute the accuracy of the classifier on the real test data and the synthesized test data.
The accuracy using \alg is 51 $\pm$ 1\% while the accuracy using RNN is 55 $\pm$ 4\% and that of a standard Transformer with position embedding is much worse at 88 $\pm$ 15\%. Standard deviation is computed across different subsets of features $x_1, (x_1,x_2), \ldots, (x_1,\ldots, x_d)$ as inputs to the classifier. This suggests that samples generated by \alg are much closer those in the real dataset than those generated by the RNN model.
\begin{wraptable}{r}{0.4 \textwidth}
\renewcommand{\arraystretch}{1.2}
\caption{\tbf{Average precision for out-of-distribution detection} The results for NADE, NICE and TAN were (carefully) eye-balled from the plots of~\cite{oliva2018transformation}.}
\label{tab:anomaly}
\begin{center}
\begin{small}
\resizebox{\linewidth}{!}{
\begin{tabular}{p{1.75cm} rrrr}
 & NADE & NICE & TAN & \alg\\
\toprule
\pendigits & 0.91 & 0.92 & 0.97 & 0.98  \\
\forest  & 0.87 & 0.80 & 0.94 & 0.95 \\
\satimage & 0.98 & 0.975 & 0.98 & 1.0 \\
\bottomrule
\end{tabular}
}
\end{small}
\end{center}
\end{wraptable}

\heading{3. Out-of-distribution detection} 
This is a classical application of density estimation techniques where we seek to discover anomalous samples in a given dataset. We follow the setup of~\cite{oliva2018transformation}: we call a datum out-of-distribution if the likelihood of a datum $x$ under the model $q(x; \th) \leq t$ for a chosen threshold $t \geq 0$. We can compute the average precision of detecting out-of-distribution samples by sweeping across different values of $t$. The results are shown in~\cref{tab:anomaly}. Observe that \alg obtains extremely good performance, of more than 0.95 average precision, on the three datasets.

\begin{wraptable}{r}{0.3 \textwidth}
\renewcommand{\arraystretch}{1.2}
\caption{\tbf{Average test log-likelihood (nats) for \hepmass dataset with and without additive noise in the training data.}}
\label{tab:noisy_exp}
\begin{center}
\begin{small}
\resizebox{\linewidth}{!}{
\begin{tabular}{p{1cm} rr}
 & Clean Data & Noisy Data \\
\toprule
NSF & -14.51  & -14.98  \\
\alg  & -11.98 & -12.43\\
\bottomrule
\end{tabular}
}
\end{small}
\end{center}
\end{wraptable}
\heading{4. \alg builds robustness to noisy data}
Real data may contain noise and in order to be useful on
downstream tasks. Density estimation must be insensitive to such noise
but the maximum-likelihood objective is sensitive to noise in the training
data. Methods such as NSF~\cite{durkan2019neural} or MAF~\cite{papamakarios2017masked} indirectly mitigate this sensitivity using
permutations of the input data or masking within hidden layers but these
operations are not designed to be robust to noisy data.
We study how \alg deals with this scenario. 
We add noise to 10\% of the entries in the training data; we then fit both \alg
and NSF on this noisy data; both models are evaluated on clean test data. As~\cref{tab:noisy_exp} shows, the degradation of both \alg and NSF is about the
same;
the former obtains a higher log-likelihood as noted in~\cref{tab:benchmark}.

\section{Discussion}
\label{s:discussion}

This paper demonstrates that self-attention is naturally suited to building auto-regressive models with strong performance in continuous and discrete valued density estimation tasks.
Our proposed method is a universal density estimator that is simpler and more flexible (without restrictions of invertibility and tractable Jacobian computation) than architectures for normalizing flows, and it also handles long-range dependencies better than other auto-regressive models based on recurrent structures. We contribute a suite of downstream tasks such as regression, out of distribution detection, and robustness to noisy data, which evaluate how useful the density estimates are in real-world applications.
\alg demonstrates state-of-the-art empirical results that are better than many competing approaches across several extensive benchmarks. 

{
\clearpage
\footnotesize
\setlength{\bibsep}{5pt}
\bibliography{transformer}

\begin{thebibliography}{68}
\providecommand{\natexlab}[1]{#1}
\providecommand{\url}[1]{\texttt{#1}}
\expandafter\ifx\csname urlstyle\endcsname\relax
  \providecommand{\doi}[1]{doi: #1}\else
  \providecommand{\doi}{doi: \begingroup \urlstyle{rm}\Url}\fi

\bibitem[Altun \& Smola(2006)Altun and Smola]{altun2006unifying}
Yasemin Altun and Alex Smola.
\newblock Unifying divergence minimization and statistical inference via convex
  duality.
\newblock In \emph{International Conference on Computational Learning Theory},
  pp.\  139--153. Springer, 2006.

\bibitem[Benidis et~al.(2020)Benidis, Rangapuram, Flunkert, Wang, Maddix,
  Turkmen, Gasthaus, Bohlke-Schneider, Salinas, Stella,
  et~al.]{benidis2020neural}
Konstantinos Benidis, Syama~Sundar Rangapuram, Valentin Flunkert, Bernie Wang,
  Danielle Maddix, Caner Turkmen, Jan Gasthaus, Michael Bohlke-Schneider, David
  Salinas, Lorenzo Stella, et~al.
\newblock Neural forecasting: Introduction and literature overview.
\newblock \emph{arXiv preprint arXiv:2004.10240}, 2020.

\bibitem[Berg et~al.(2018)Berg, Hasenclever, Tomczak, and
  Welling]{berg2018sylvester}
Rianne van~den Berg, Leonard Hasenclever, Jakub~M Tomczak, and Max Welling.
\newblock Sylvester normalizing flows for variational inference.
\newblock \emph{arXiv preprint arXiv:1803.05649}, 2018.

\bibitem[Chelba et~al.(2013)Chelba, Mikolov, Schuster, Ge, Brants, Koehn, and
  Robinson]{chelba2013one}
Ciprian Chelba, Tomas Mikolov, Mike Schuster, Qi~Ge, Thorsten Brants, Phillipp
  Koehn, and Tony Robinson.
\newblock One billion word benchmark for measuring progress in statistical
  language modeling.
\newblock \emph{arXiv:1312.3005}, 2013.

\bibitem[Chen et~al.(2020)Chen, Zhan, Mu, Wang, Zheng, Miao, Zhu, and
  Li]{Chen2020Mengjie}
Mengjie Chen, Qi~Zhan, Zepeng Mu, Lili Wang, Zhaohui Zheng, Jinlin Miao, Ping
  Zhu, and Yang~I Li.
\newblock Alignment of single-cell rna-seq samples without over-correction
  using kernel density matching.
\newblock \emph{bioRxiv}, 2020.
\newblock \doi{10.1101/2020.01.05.895136}.

\bibitem[Cho et~al.(2014)Cho, Van~Merri{\"e}nboer, Gulcehre, Bahdanau,
  Bougares, Schwenk, and Bengio]{cho2014learning}
Kyunghyun Cho, Bart Van~Merri{\"e}nboer, Caglar Gulcehre, Dzmitry Bahdanau,
  Fethi Bougares, Holger Schwenk, and Yoshua Bengio.
\newblock Learning phrase representations using rnn encoder-decoder for
  statistical machine translation.
\newblock \emph{arXiv preprint arXiv:1406.1078}, 2014.

\bibitem[Choi et~al.(2011)Choi, Tan, Anandkumar, and Willsky]{choi2011learning}
Myung~Jin Choi, Vincent~YF Tan, Animashree Anandkumar, and Alan~S Willsky.
\newblock Learning latent tree graphical models.
\newblock \emph{Journal of Machine Learning Research}, 12\penalty0
  (May):\penalty0 1771--1812, 2011.

\bibitem[Chow \& Liu(1968)Chow and Liu]{chow1968approximating}
C~Chow and Cong Liu.
\newblock Approximating discrete probability distributions with dependence
  trees.
\newblock \emph{IEEE transactions on Information Theory}, 14\penalty0
  (3):\penalty0 462--467, 1968.

\bibitem[Collins et~al.(2017)Collins, Sohl{-}Dickstein, and
  Sussillo]{collins17}
Jasmine Collins, Jascha Sohl{-}Dickstein, and David Sussillo.
\newblock Capacity and trainability in recurrent neural networks.
\newblock In \emph{International Conference on Learning Representations}, 2017.

\bibitem[Daniels(1961)]{daniels1961}
H.~E. Daniels.
\newblock The asymptotic efficiency of a maximum likelihood estimator.
\newblock In \emph{Proceedings of the Fourth Berkeley Symposium on Mathematical
  Statistics and Probability, Volume 1: Contributions to the Theory of
  Statistics}, pp.\  151--163, Berkeley, Calif., 1961. University of California
  Press.

\bibitem[De~Cao et~al.(2019)De~Cao, Titov, and Aziz]{de2019block}
Nicola De~Cao, Ivan Titov, and Wilker Aziz.
\newblock Block neural autoregressive flow.
\newblock \emph{arXiv preprint arXiv:1904.04676}, 2019.

\bibitem[Dinh et~al.(2014)Dinh, Krueger, and Bengio]{dinh2014nice}
Laurent Dinh, David Krueger, and Yoshua Bengio.
\newblock Nice: Non-linear independent components estimation.
\newblock \emph{arXiv preprint arXiv:1410.8516}, 2014.

\bibitem[Dinh et~al.(2016)Dinh, Sohl-Dickstein, and Bengio]{dinh2016density}
Laurent Dinh, Jascha Sohl-Dickstein, and Samy Bengio.
\newblock {Density estimation using Real NVP}.
\newblock \emph{arXiv preprint arXiv:1605.08803}, 2016.

\bibitem[Dua \& Graff(2017)Dua and Graff]{Dua:2019}
Dheeru Dua and Casey Graff.
\newblock {UCI} machine learning repository, 2017.

\bibitem[Dudley(2018)]{dudley2018real}
Richard~M Dudley.
\newblock \emph{Real analysis and probability}.
\newblock Chapman and Hall/CRC, 2018.

\bibitem[Durkan et~al.(2019{\natexlab{a}})Durkan, Bekasov, Murray, and
  Papamakarios]{durkan2019cubic}
Conor Durkan, Artur Bekasov, Iain Murray, and George Papamakarios.
\newblock Cubic-spline flows.
\newblock \emph{arXiv preprint arXiv:1906.02145}, 2019{\natexlab{a}}.

\bibitem[Durkan et~al.(2019{\natexlab{b}})Durkan, Bekasov, Murray, and
  Papamakarios]{durkan2019neural}
Conor Durkan, Artur Bekasov, Iain Murray, and George Papamakarios.
\newblock Neural spline flows.
\newblock In \emph{Advances in Neural Information Processing Systems}, pp.\
  7509--7520, 2019{\natexlab{b}}.

\bibitem[Fortet \& Mourier(1953)Fortet and Mourier]{fortet1953convergence}
Robert Fortet and Edith Mourier.
\newblock Convergence de la r{\'e}partition empirique vers la r{\'e}partition
  th{\'e}orique.
\newblock In \emph{Annales scientifiques de l'{\'E}cole Normale
  Sup{\'e}rieure}, volume~70, pp.\  267--285, 1953.

\bibitem[Germain et~al.(2015)Germain, Gregor, Murray, and
  Larochelle]{germain2015made}
Mathieu Germain, Karol Gregor, Iain Murray, and Hugo Larochelle.
\newblock {MADE: Masked autoencoder for distribution estimation}.
\newblock In \emph{International Conference on Machine Learning}, pp.\
  881--889, 2015.

\bibitem[Goodfellow et~al.(2014)Goodfellow, Pouget-Abadie, Mirza, Xu,
  Warde-Farley, Ozair, Courville, and Bengio]{goodfellow2014generative}
Ian Goodfellow, Jean Pouget-Abadie, Mehdi Mirza, Bing Xu, David Warde-Farley,
  Sherjil Ozair, Aaron Courville, and Yoshua Bengio.
\newblock Generative adversarial nets.
\newblock In \emph{Advances in neural information processing systems}, pp.\
  2672--2680, 2014.

\bibitem[Grathwohl et~al.(2018)Grathwohl, Chen, Bettencourt, Sutskever, and
  Duvenaud]{grathwohl2018ffjord}
Will Grathwohl, Ricky~TQ Chen, Jesse Bettencourt, Ilya Sutskever, and David
  Duvenaud.
\newblock Ffjord: Free-form continuous dynamics for scalable reversible
  generative models.
\newblock \emph{arXiv preprint arXiv:1810.01367}, 2018.

\bibitem[Gretton et~al.(2009)Gretton, Fukumizu, Harchaoui, and
  Sriperumbudur]{gretton2009fast}
Arthur Gretton, Kenji Fukumizu, Zaid Harchaoui, and Bharath~K Sriperumbudur.
\newblock A fast, consistent kernel two-sample test.
\newblock In \emph{Advances in neural information processing systems}, pp.\
  673--681, 2009.

\bibitem[Gretton et~al.(2012)Gretton, Borgwardt, Rasch, Sch{\"o}lkopf, and
  Smola]{gretton2012kernel}
Arthur Gretton, Karsten~M Borgwardt, Malte~J Rasch, Bernhard Sch{\"o}lkopf, and
  Alexander Smola.
\newblock A kernel two-sample test.
\newblock \emph{Journal of Machine Learning Research}, 13\penalty0
  (Mar):\penalty0 723--773, 2012.

\bibitem[Hochreiter \& Schmidhuber(1997)Hochreiter and
  Schmidhuber]{hochreiter1997long}
Sepp Hochreiter and J{\"u}rgen Schmidhuber.
\newblock Long short-term memory.
\newblock \emph{Neural computation}, 9\penalty0 (8):\penalty0 1735--1780, 1997.

\bibitem[Huang et~al.(2018)Huang, Krueger, Lacoste, and
  Courville]{huang2018neural}
Chin-Wei Huang, David Krueger, Alexandre Lacoste, and Aaron Courville.
\newblock Neural autoregressive flows.
\newblock \emph{arXiv preprint arXiv:1804.00779}, 2018.

\bibitem[Jang et~al.(2016)Jang, Gu, and Poole]{jang2016categorical}
Eric Jang, Shixiang Gu, and Ben Poole.
\newblock Categorical reparameterization with gumbel-softmax, 2016.

\bibitem[Jordan et~al.(1999)Jordan, Ghahramani, Jaakkola, and
  Saul]{jordan1999introduction}
Michael~I Jordan, Zoubin Ghahramani, Tommi~S Jaakkola, and Lawrence~K Saul.
\newblock An introduction to variational methods for graphical models.
\newblock \emph{Machine learning}, 37\penalty0 (2):\penalty0 183--233, 1999.

\bibitem[Karras et~al.(2017)Karras, Aila, Laine, and
  Lehtinen]{karras2017progressive}
Tero Karras, Timo Aila, Samuli Laine, and Jaakko Lehtinen.
\newblock Progressive growing of gans for improved quality, stability, and
  variation.
\newblock \emph{arXiv preprint arXiv:1710.10196}, 2017.

\bibitem[Kingma \& Ba(2014)Kingma and Ba]{kingma2014adam}
Diederik Kingma and Jimmy Ba.
\newblock Adam: A method for stochastic optimization.
\newblock \emph{arXiv:1412.6980}, 2014.

\bibitem[Kingma \& Welling(2013)Kingma and Welling]{kingma2013auto}
Diederik~P Kingma and Max Welling.
\newblock Auto-encoding variational bayes.
\newblock \emph{arXiv preprint arXiv:1312.6114}, 2013.

\bibitem[Kingma et~al.(2015)Kingma, Salimans, and
  Welling]{kingma2015variational}
Diederik~P Kingma, Tim Salimans, and Max Welling.
\newblock Variational dropout and the local reparameterization trick.
\newblock In \emph{NIPS}, pp.\  2575--2583, 2015.

\bibitem[Kingma et~al.(2016)Kingma, Salimans, Jozefowicz, Chen, Sutskever, and
  Welling]{kingma2016improved}
Durk~P Kingma, Tim Salimans, Rafal Jozefowicz, Xi~Chen, Ilya Sutskever, and Max
  Welling.
\newblock Improved variational inference with inverse autoregressive flow.
\newblock In \emph{Advances in neural information processing systems}, pp.\
  4743--4751, 2016.

\bibitem[Krishnaswamy et~al.(2014)Krishnaswamy, Spitzer, Mingueneau, Bendall,
  Litvin, Stone, Pe{\textquoteright}er, and Nolan]{Krishnaswamy1250689}
Smita Krishnaswamy, Matthew~H. Spitzer, Michael Mingueneau, Sean~C. Bendall,
  Oren Litvin, Erica Stone, Dana Pe{\textquoteright}er, and Garry~P. Nolan.
\newblock Conditional density-based analysis of t cell signaling in single-cell
  data.
\newblock \emph{Science}, 346\penalty0 (6213), 2014.
\newblock ISSN 0036-8075.
\newblock \doi{10.1126/science.1250689}.

\bibitem[Larochelle \& Murray(2011)Larochelle and Murray]{larochelle2011neural}
Hugo Larochelle and Iain Murray.
\newblock The neural autoregressive distribution estimator.
\newblock In \emph{Proceedings of the Fourteenth International Conference on
  Artificial Intelligence and Statistics}, pp.\  29--37, 2011.

\bibitem[LeCun et~al.(1990)LeCun, Boser, Denker, Henderson, Howard, Hubbard,
  and Jackel]{lecun1990handwritten}
Yann LeCun, Bernhard~E Boser, John~S Denker, Donnie Henderson, Richard~E
  Howard, Wayne~E Hubbard, and Lawrence~D Jackel.
\newblock Handwritten digit recognition with a back-propagation network.
\newblock In \emph{Advances in neural information processing systems}, pp.\
  396--404, 1990.

\bibitem[Li et~al.(2017)Li, Chang, Cheng, Yang, and P{\'o}czos]{li2017mmd}
Chun-Liang Li, Wei-Cheng Chang, Yu~Cheng, Yiming Yang, and Barnab{\'a}s
  P{\'o}czos.
\newblock Mmd gan: Towards deeper understanding of moment matching network.
\newblock In \emph{Advances in Neural Information Processing Systems}, pp.\
  2203--2213, 2017.

\bibitem[Li et~al.(2019)Li, Jin, Xuan, Zhou, Chen, Wang, and
  Yan]{NIPS2019_8766}
Shiyang Li, Xiaoyong Jin, Yao Xuan, Xiyou Zhou, Wenhu Chen, Yu-Xiang Wang, and
  Xifeng Yan.
\newblock Enhancing the locality and breaking the memory bottleneck of
  transformer on time series forecasting.
\newblock In H.~Wallach, H.~Larochelle, A.~Beygelzimer, F.~d\textquotesingle
  Alch\'{e}-Buc, E.~Fox, and R.~Garnett (eds.), \emph{Advances in Neural
  Information Processing Systems 32}, pp.\  5243--5253. Curran Associates,
  Inc., 2019.

\bibitem[Loh \& Wainwright(2013)Loh and Wainwright]{Lohestimator2013}
Po-Ling Loh and Martin~J Wainwright.
\newblock Regularized m-estimators with nonconvexity: Statistical and
  algorithmic theory for local optima.
\newblock In C.~J.~C. Burges, L.~Bottou, M.~Welling, Z.~Ghahramani, and K.~Q.
  Weinberger (eds.), \emph{Advances in Neural Information Processing Systems
  26}, pp.\  476--484. Curran Associates, Inc., 2013.

\bibitem[Lopez-Paz \& Oquab(2016)Lopez-Paz and Oquab]{lopez2016revisiting}
David Lopez-Paz and Maxime Oquab.
\newblock Revisiting classifier two-sample tests.
\newblock \emph{arXiv preprint arXiv:1610.06545}, 2016.

\bibitem[Maddison et~al.(2016)Maddison, Mnih, and Teh]{maddison2016concrete}
Chris~J Maddison, Andriy Mnih, and Yee~Whye Teh.
\newblock The concrete distribution: A continuous relaxation of discrete random
  variables.
\newblock \emph{arXiv preprint arXiv:1611.00712}, 2016.

\bibitem[Martin et~al.(2001)Martin, Fowlkes, Tal, and Malik]{MartinFTM01}
D.~Martin, C.~Fowlkes, D.~Tal, and J.~Malik.
\newblock A database of human segmented natural images and its application to
  evaluating segmentation algorithms and measuring ecological statistics.
\newblock In \emph{Proc. 8th Int'l Conf. Computer Vision}, volume~2, pp.\
  416--423, July 2001.

\bibitem[M{\"u}ller(1997)]{muller1997integral}
Alfred M{\"u}ller.
\newblock Integral probability metrics and their generating classes of
  functions.
\newblock \emph{Advances in Applied Probability}, 29\penalty0 (2):\penalty0
  429--443, 1997.

\bibitem[Murphy(2013)]{murphy2013machine}
Kevin~P. Murphy.
\newblock \emph{Machine learning : a probabilistic perspective}.
\newblock MIT Press, Cambridge, Mass. [u.a.], 2013.

\bibitem[Nash \& Durkan(2019)Nash and Durkan]{nash2019autoregressive}
Charlie Nash and Conor Durkan.
\newblock Autoregressive energy machines.
\newblock \emph{arXiv preprint arXiv:1904.05626}, 2019.

\bibitem[Oliva et~al.(2018)Oliva, Dubey, Zaheer, Poczos, Salakhutdinov, Xing,
  and Schneider]{oliva2018transformation}
Junier~B Oliva, Avinava Dubey, Manzil Zaheer, Barnabas Poczos, Ruslan
  Salakhutdinov, Eric~P Xing, and Jeff Schneider.
\newblock Transformation autoregressive networks.
\newblock \emph{arXiv preprint arXiv:1801.09819}, 2018.

\bibitem[Oord et~al.(2016)Oord, Kalchbrenner, and Kavukcuoglu]{oord2016pixel}
Aaron van~den Oord, Nal Kalchbrenner, and Koray Kavukcuoglu.
\newblock Pixel recurrent neural networks.
\newblock \emph{arXiv preprint arXiv:1601.06759}, 2016.

\bibitem[Papamakarios et~al.(2017)Papamakarios, Pavlakou, and
  Murray]{papamakarios2017masked}
George Papamakarios, Theo Pavlakou, and Iain Murray.
\newblock Masked autoregressive flow for density estimation.
\newblock In \emph{Advances in Neural Information Processing Systems}, pp.\
  2338--2347, 2017.

\bibitem[Papamakarios et~al.(2019)Papamakarios, Nalisnick, Rezende, Mohamed,
  and Lakshminarayanan]{papamakarios2019normalizing}
George Papamakarios, Eric Nalisnick, Danilo~Jimenez Rezende, Shakir Mohamed,
  and Balaji Lakshminarayanan.
\newblock Normalizing flows for probabilistic modeling and inference.
\newblock \emph{arXiv preprint arXiv:1912.02762}, 2019.

\bibitem[Parmar et~al.(2018)Parmar, Vaswani, Uszkoreit, Kaiser, Shazeer, Ku,
  and Tran]{parmar2018image}
Niki Parmar, Ashish Vaswani, Jakob Uszkoreit, Lukasz Kaiser, Noam Shazeer,
  Alexander Ku, and Dustin Tran.
\newblock Image transformer.
\newblock In \emph{ICML}, 2018.

\bibitem[Phillips et~al.(2004)Phillips, Dud{\'\i}k, and
  Schapire]{phillips2004maximum}
Steven~J Phillips, Miroslav Dud{\'\i}k, and Robert~E Schapire.
\newblock A maximum entropy approach to species distribution modeling.
\newblock In \emph{Proceedings of the twenty-first international conference on
  Machine learning}, pp.\ ~83, 2004.

\bibitem[Rayana(2016)]{rayana16}
Shebuti Rayana.
\newblock {ODDS} library, 2016.
\newblock http://odds.cs.stonybrook.edu.

\bibitem[Ren et~al.(2019)Ren, Ruan, Tan, Qin, Zhao, Zhao, and
  Liu]{NIPS2019_8580}
Yi~Ren, Yangjun Ruan, Xu~Tan, Tao Qin, Sheng Zhao, Zhou Zhao, and Tie-Yan Liu.
\newblock Fastspeech: Fast, robust and controllable text to speech.
\newblock In H.~Wallach, H.~Larochelle, A.~Beygelzimer, F.~d\textquotesingle
  Alch\'{e}-Buc, E.~Fox, and R.~Garnett (eds.), \emph{Advances in Neural
  Information Processing Systems 32}, pp.\  3171--3180. Curran Associates,
  Inc., 2019.

\bibitem[Rezende \& Mohamed(2015)Rezende and Mohamed]{rezende2015variational}
Danilo~Jimenez Rezende and Shakir Mohamed.
\newblock Variational inference with normalizing flows.
\newblock \emph{arXiv preprint arXiv:1505.05770}, 2015.

\bibitem[Siegelmann \& Sontag(1995)Siegelmann and
  Sontag]{siegelmann1995computational}
Hava~T Siegelmann and Eduardo~D Sontag.
\newblock On the computational power of neural nets.
\newblock \emph{Journal of computer and system sciences}, 50\penalty0
  (1):\penalty0 132--150, 1995.

\bibitem[Silverman(2018)]{silverman2018density}
Bernard~W Silverman.
\newblock \emph{Density estimation for statistics and data analysis}.
\newblock Routledge, 2018.

\bibitem[Sorenson \& Alspach(1971)Sorenson and Alspach]{sorenson1971recursive}
Harold~W Sorenson and Daniel~L Alspach.
\newblock {Recursive Bayesian estimation using Gaussian sums}.
\newblock \emph{Automatica}, 7\penalty0 (4):\penalty0 465--479, 1971.

\bibitem[Sriperumbudur et~al.(2016)]{sriperumbudur2016optimal}
Bharath Sriperumbudur et~al.
\newblock On the optimal estimation of probability measures in weak and strong
  topologies.
\newblock \emph{Bernoulli}, 22\penalty0 (3):\penalty0 1839--1893, 2016.

\bibitem[Steinwart(2001)]{steinwart2001influence}
Ingo Steinwart.
\newblock On the influence of the kernel on the consistency of support vector
  machines.
\newblock \emph{Journal of machine learning research}, 2\penalty0
  (Nov):\penalty0 67--93, 2001.

\bibitem[Tabak \& Turner(2013)Tabak and Turner]{tabak2013family}
Esteban~G Tabak and Cristina~V Turner.
\newblock A family of nonparametric density estimation algorithms.
\newblock \emph{Communications on Pure and Applied Mathematics}, 66\penalty0
  (2):\penalty0 145--164, 2013.

\bibitem[Uria et~al.(2014)Uria, Murray, and Larochelle]{uria2014deep}
Benigno Uria, Iain Murray, and Hugo Larochelle.
\newblock A deep and tractable density estimator, 2014.

\bibitem[Uria et~al.(2016)Uria, C{\^o}t{\'e}, Gregor, Murray, and
  Larochelle]{uria2016neural}
Benigno Uria, Marc-Alexandre C{\^o}t{\'e}, Karol Gregor, Iain Murray, and Hugo
  Larochelle.
\newblock Neural autoregressive distribution estimation.
\newblock \emph{The Journal of Machine Learning Research}, 17\penalty0
  (1):\penalty0 7184--7220, 2016.

\bibitem[Vaswani et~al.(2017)Vaswani, Shazeer, Parmar, Uszkoreit, Jones, Gomez,
  Kaiser, and Polosukhin]{vaswani2017attention}
Ashish Vaswani, Noam Shazeer, Niki Parmar, Jakob Uszkoreit, Llion Jones,
  Aidan~N Gomez, Lukasz Kaiser, and Illia Polosukhin.
\newblock Attention is all you need.
\newblock In \emph{Advances in neural information processing systems}, pp.\
  5998--6008, 2017.

\bibitem[Vergara et~al.(2012)Vergara, Vembu, Ayhan, Ryan, Homer, and
  Huerta]{vergara2012chemical}
Alexander Vergara, Shankar Vembu, Tuba Ayhan, Margaret~A Ryan, Margie~L Homer,
  and Ram{\'o}n Huerta.
\newblock Chemical gas sensor drift compensation using classifier ensembles.
\newblock \emph{Sensors and Actuators B: Chemical}, 166:\penalty0 320--329,
  2012.

\bibitem[Wahba et~al.(1995)Wahba, Wang, Gu, Klein, Klein,
  et~al.]{wahba1995smoothing}
Grace Wahba, Yuedong Wang, Chong Gu, Ronald Klein, Barbara Klein, et~al.
\newblock Smoothing spline anova for exponential families, with application to
  the wisconsin epidemiological study of diabetic retinopathy: the 1994 neyman
  memorial lecture.
\newblock \emph{The Annals of Statistics}, 23\penalty0 (6):\penalty0
  1865--1895, 1995.

\bibitem[Wang et~al.(2019)Wang, Li, and Smola]{wang2019language}
Chenguang Wang, Mu~Li, and Alexander~J Smola.
\newblock Language models with transformers.
\newblock \emph{arXiv preprint arXiv:1904.09408}, 2019.

\bibitem[Wasserman(2006)]{wasserman2006all}
Larry Wasserman.
\newblock \emph{All of nonparametric statistics}.
\newblock Springer Science \& Business Media, 2006.

\bibitem[Wu et~al.(2020)Wu, Green, Ben, and O'Banion]{wu2020deep}
Neo Wu, Bradley Green, Xue Ben, and Shawn O'Banion.
\newblock Deep transformer models for time series forecasting: The influenza
  prevalence case, 2020.

\bibitem[Yun et~al.(2019)Yun, Bhojanapalli, Rawat, Reddi, and
  Kumar]{yun2019transformers}
Chulhee Yun, Srinadh Bhojanapalli, Ankit~Singh Rawat, Sashank Reddi, and Sanjiv
  Kumar.
\newblock {Are Transformers universal approximators of sequence-to-sequence
  functions}?
\newblock In \emph{International Conference on Learning Representations}, 2019.

\end{thebibliography}
\bibliographystyle{iclr2021_conference}
}

\clearpage
\beginsupplement

\appendix

\begin{center}
\tbf{\Large Appendix\\[0.15in]
\papertitle}
\end{center}

\section{Regularization using Maximum Mean Discrepancy (MMD)}
\label{ss:mmd}

An alternative to maximum likelihood estimation, and in some cases a
dual to it, is to perform non-parametric
moment matching~\cite{altun2006unifying}. 
One can combine a
log-likelihood loss and a two-sample discrepancy loss to ensure high fidelity, i.e.,
the samples resemble those from the original dataset. 

We can test whether two distributions $p$ and $q$ supported on a space
$\XX$ are different using samples drawn from each of them by finding a
smooth function that is large on samples drawn from $p$ and small on
samples drawn from $q$. If $x \sim p$ and $y \sim q$, then $p=q$ if
and only if $\E_x \sqbrac{f(x)} = \E_y \sqbrac{f(y)}$ for all bounded
continuous functions $f$ on $\XX$ (Lemma 9.3.2 in~\cite{dudley2018real}). We
can exploit this result
computationally by restricting the test functions to some class
$f \in \FF$ and finding the worst test function. This leads to the
Maximum Mean Discrepancy metric defined
next~\cite{fortet1953convergence,muller1997integral,gretton2012kernel,sriperumbudur2016optimal}.
%
For a class $\FF$ of functions $f: \XX \to \reals$, the MMD between distributions $p, q$ is
\beq{
    \mmd[\FF, p, q] = \sup_{f \in \FF}\ \Rbrac{\mathbb{E}_{x\sim p} \sqbrac{f(x)} - \mathbb{E}_{y \sim q} \sqbrac{f(y)}}.
    \label{eq:mmd}
}
%
It is cumbersome to find the supremum over a general class of functions $\FF$
to compute the MMD. We can however restrict $\FF$
to be the unit ball in a universal Reproducing Kernel Hilbert
Space (RKHS) ~\cite{gretton2012kernel}) with kernel $k$.
The MMD is a metric in this case
and is given by
\begin{align}
    \label{eq:mmdk}
  & \mmd^2[k, p, q]
  =  \E_{x,x' \sim p}\sqbrac{k(x,x')} - 2
  \E_{x \sim p, y \sim q}\sqbrac{k(x,y)} + \E_{y,y' \sim p}\sqbrac{k(y,y')}
\end{align}
With a universal kernel (e.g.\ Gaussian, Laplace),  MMD will capture \emph{any} difference between distributions ~\cite{steinwart2001influence}. We can easily obtain an empirical estimate of the MMD
above using samples~\cite{gretton2012kernel}. 

\subsection{Gradient of MMD term}\label{sec:grad_mmd}
Evaluating the gradient of the MMD term~\cref{eq:mmdk} involves differentiating
the samples from $q_\theta$ with respect to the parameters $\theta$. When \alg handles continuous variables, it samples from mixture of Gaussians for
each conditional hence gradient of the MMD term is done using the reparametrization trick~\cite{kingma2015variational}. On the other hand, gradient of the MMD term \cref{eq:mmdk} is computed using the Gumbel softmax trick~\cite{maddison2016concrete, jang2016categorical} for discrete variables (i.e., binarized MNIST). The objective of \alg is thus general enough to easily handle both continuous and discrete data distributions which is not usually the case with other methods.

\subsection{What if only MMD is used as a loss function?}\label{sec:only_mmd_exp}
In theory, MMD with a universal kernel would also produce \emph{consistent} estimates~\cite{gretton2009fast}, but maximizing likelihood ensures our estimate is statistically \emph{efficient}~\cite{daniels1961,Lohestimator2013}. 
Although a two-sample discrepancy loss like MMD can help produce samples that resemble those from the original dataset, using MMD as the only objective function will not result in a good density estimator given limited data. To test this hypothesis, we run experiments in which only MMD is utilized as a loss function without the maximum-likelihood term. 

Note that using our MMD term with a fixed kernel (without solving a min-max problem to optimize the kernel) only implies that the distributions match up to the chosen kernel.
In other words, the global minima of our MMD objective with a fixed
kernel are not necessarily the global minima of the maximum likelihood
term, an additional reason for the poor NLLs as shown in \cref{tab:onmd}. While the MLE and MMD objectives should both result in a consistent estimator of $p(x)$ in the limit of infinite data and model-capacity, in finite settings these objectives favor estimators with different properties. 
In practice a combination of both objectives yields superior results, and makes performance less dependent on the MMD kernel bandwidth.

\begin{table*}[!htpb]
\centering
\renewcommand{\arraystretch}{1.25}
\caption{Test likelihood (higher is better) using \textbf{ONLY} the MMD objective for training vs. \alg.}
\begin{tabular}{lrr}
 Dataset & Only MMD & \alg \\
 \midrule
\power          & -5.00 $\pm$ 0.24 & \tbf{0.73 $\pm$ 0.00} \\
\gas            & -10.85 $\pm$ 0.16 & \tbf{13.27 $\pm$ 0.01} \\
\hepmass        & -28.18 $\pm$ 0.12 & \tbf{-12.01 $\pm$ 0.03}\\
\miniboone      & -68.06 & \tbf{-9.49 $\pm$ 0.13} \\
\bsds           & 70.70 $\pm$ 0.32  & \tbf{160.01 $\pm$ 0.02}
\end{tabular}
\label{tab:onmd}
\vspace{1em}
\end{table*}

\section{Density estimation with few data}
\label{ss:ablation_appendix}

MMD regularization especially helps when we have few training data. Test likelihoods (higher is better)  after training on a sub-sampled \miniboone dataset are: 
-55.62 vs. -55.53 ($100$ samples) and -49.07 vs. -48.90 ($500$ samples)
for \alg without and with MMD-penalization, respectively. 
Density estimation from few samples is common in biological applications, where experiments are often too costly to be extensively replicated \citep{Krishnaswamy1250689, Chen2020Mengjie}.

\section{Samples from \alg trained on MNIST}
\label{sec:mnist_10}
We also evaluate \alg on the MNIST dataset in terms of the log-likelihood on test data. As~\cref{tab:mnist} shows \alg obtains high log-likelihood even compared to sophisticated models such as Pixel-RNN~\cite{oord2016pixel}. This is a difficult dataset for density estimation because of the high dimensionality. \cref{fig:mnist_10} shows the quality of the samples generated by \alg.

\begin{figure}[!htp]
\centering
\includegraphics[width=0.4\textwidth]{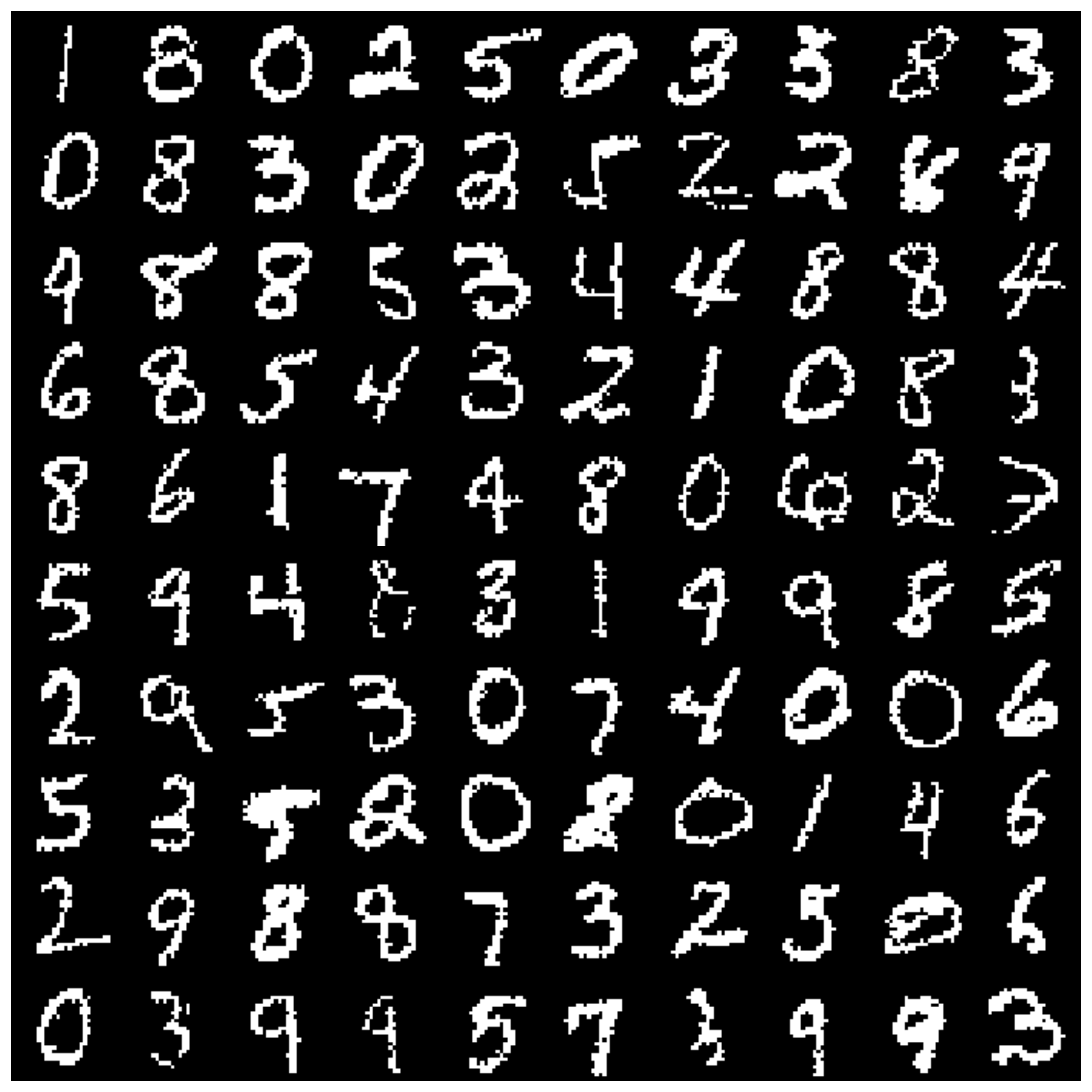}
\caption{\tbf{Samples from \alg fitted on binary MNIST.}}
\label{fig:mnist_10}
\end{figure}

\section{Hyper-parameters}
\label{s:app:hp_benchmark}

All models are trained for 1000 epochs with the Adam optimizer~\cite{kingma2014adam}. The MMD kernel is a mixture of 5 Gaussians for all datasets, i.e.\ $k(x,y) = \sum_{j=1}^5 k_j(x,y)$ where each $k_j(x,y) = e^{-\norm{x-y}^2_2/\s_j^2}$ with bandwidths $\s_j \in \cbrac{1, 2, 4, 8, 16}$~\cite{li2017mmd}. Each layer of \alg consists of a GRU followed by self-attention block which consists of multi-head-self-attention and fully connected feed-forward layer. A layer normalization and a residual connection is used around each of these components (i.e. multi-head-self-attention and fully connected) in self-attention block~\cite{vaswani2017attention}. Multiple layers of self-attention block can be stacked together in \alg. Finally, the output layer of \alg outputs $\pi_{k,i}, \mu_{k,i}$, and $\s_{k,i}$ which specify a univariate Gaussian mixture model approximation of each conditional distribution in the auto-regressive factorization. \cref{tab:hp} shows \alg hyper-parameters. We used a coarse random search based on validation NLL to select suitable hyper-parameters. Note that dataset used in this work have very different number of samples and dimensions as shown in \cref{tab:daatsetinof}

\begin{table*}[!htpb]
\textbf{}\renewcommand{\arraystretch}{1.25}
\caption{\tbf{Hyper-parameters for benchmark datasets.}}
\label{tab:hp}
\begin{center}
\resizebox{\textwidth}{!}{
\begin{tabular}{p{5cm} rrrrrr}
& \power & \gas & \hepmass & \miniboone & \bsds & \mnist\\
\toprule
MMD coefficient $\l$        & 0.2 & 0.1 & 0.1 & 0.4 & 0.2 & 0.1\\
Gaussian mixture components & 150 & 100 & 100 & 20 & 100 & 1\\
Number of layers            & 5 & 8 & 6 & 8 & 5 & 6\\
Multi-head attention head   & 8 & 16 & 8 & 8 & 2 & 4\\
Hidden neurons              & 512 & 400 & 128 & 64 & 128 & 256\\
Dropout                     & 0.1 & 0.1 & 0.1 & 0.2 & 0.3 & 0.1\\
Learning rate               & 3E-4 & 3E-4 & 5E-4 & 5E-4 & 5E-4 & 5E-4\\
Mini-batch size             & 512 & 512 & 512 & 64 & 512 & 16\\
Weight decay                & 1E-6 & 1E-6 & 1E-6 & 0 & 1E-6 & 1E-6\\
Gradient clipping norm      & 5 & 5 & 5 & 5 & 5 & 5\\
Gumbel softmax temperature          & n/a &n/a&n/a&n/a&n/a& 1.5\\
\bottomrule
\end{tabular}
}
\end{center}
\end{table*}

\begin{table*}[hbt!]
\textbf{}\renewcommand{\arraystretch}{1.25}
\caption{ \textbf{Dataset information}. In order to ensure that results are comparable and same setups and data splits are used as previous works, we closely follow the experimental setup of~\cite{papamakarios2017masked} for training/validation/test dataset splits. In particular, the preprocessing of all the datasets is kept the same as that
of~\cite{papamakarios2017masked}. Note that, these setups were used on all other baselines as well. }
\label{tab:daatsetinof}
\begin{center}
\begin{tabular}{lrrrr} 
Dataset & dimension & Training size & Validation Size & Test size \\
\toprule
\power  & 6 & 1659917 & 184435 & 204928 \\
\gas    & 8 & 852174 & 94685 & 105206 \\
\hepmass  & 21 & 315123 & 35013 & 174987 \\
\miniboone & 43 & 29556 & 3284 & 3648 \\
\bsds & 63 & 1000000 & 50000 & 250000 \\
\mnist & 784 & 50000 & 10000 & 10000 \\
\forest  & 10 & 252499 & 28055 & 5494 \\
\pendigits  &  16 & 5903 & 655 & 312 \\
\satimage   & 36 &  5095 & 566  & 142 \\
\bottomrule
\end{tabular}
 
\end{center}
\end{table*}
\null
\vfill
\end{document}